\definecolor{envNodeColor}{rgb}{0.6,0.6,0.9}
\definecolor{envArrowColor}{rgb}{0.2,0.2,0.8}
\definecolor{intNodeColor}{rgb}{0.5,0.8,0.5}
\definecolor{intArrowColor}{rgb}{0.1,0.8,0.1}
\definecolor{boolNodeColor}{rgb}{0.8,0.5,0.5}
\definecolor{boolArrowColor}{rgb}{0.8,0.2,0.2}
\newcommand{\seq}{\bar}
\newcommand{\hist}[1]{o_{\le #1}a_{<#1}}
\newcommand{\althist}[1]{o'_{\le #1}a'_{<#1}}
\newcommand{\dohist}[1]{o_{\le #1} \mid do(a_{<#1})}
\newcommand{\epair}{\left(\EE^1[U^1;\pi],\EE^2[U^2;\pi]\right)}
\newcommand{\bool}{B}
\newcommand{\fig}[1]{Figure~\ref{fig:#1}}
\newcommand{\eqn}[1]{Equation~\ref{eqn:#1}}
\newcommand{\lem}[1]{Lemma~\ref{lem:#1}}
\newcommand{\thm}[1]{Theorem~\ref{thm:#1}}
\newcommand{\defn}[1]{Definition~\ref{defn:#1}}
\newcommand{\prop}[1]{Proposition~\ref{prop:#1}}
\newcommand{\cor}[1]{Corollary~\ref{cor:#1}}
\newcommand{\sect}[1]{Section~\ref{sec:#1}}
\DeclareMathOperator*{\argmax}{argmax}
\DeclareMathOperator{\image}{image}
\newtheorem{theorem}{Theorem}
\newtheorem{proposition}[theorem]{Proposition}
\newtheorem{lemma}[theorem]{Lemma}
\newtheorem{corollary}[theorem]{Corollary}
\newtheorem{definition}[theorem]{Definition}
\title{\LARGE \bf
Toward negotiable reinforcement learning: shifting priorities in Pareto optimal sequential decision-making
}
\author{Andrew Critch
\thanks{Machine Intelligence Research Institute}
\thanks{UC Berkeley, Center for Human Compatible AI}%
}
\begin{document}

\maketitle
\thispagestyle{empty}
\pagestyle{empty}

\begin{abstract}
Existing multi-objective reinforcement learning (MORL) algorithms do not account for objectives that arise from players with differing beliefs.  Concretely, consider two players with different beliefs and utility functions who may cooperate to build a machine that takes actions on their behalf.  A representation is needed for how much the machine's policy will prioritize each player's interests over time.  Assuming the players have reached common knowledge of their situation, this paper derives a recursion that any Pareto optimal policy must satisfy.  Two qualitative observations can be made from the recursion: the machine must (1) use each player's own beliefs in evaluating how well an action will serve that player's utility function, and (2) shift the relative priority it assigns to each player's expected utilities over time, by a factor proportional to how well that player's beliefs predict the machine's inputs.  Observation (2) represents a substantial divergence from na\"{i}ve linear utility aggregation (as in Harsanyi's utilitarian theorem, and existing MORL algorithms), which is shown here to be inadequate for Pareto optimal sequential decision-making on behalf of players with different beliefs.
\end{abstract}


\section{Introduction}

It has been argued that the first AI systems with generally super-human cognitive abilities will play a pivotal decision-making role in directing the future of civilization \citep{bostrom2014superintelligence}.  If that is the case, an important question will arise: \emph{Whose values will the first super-human AI systems serve?}  Since safety is a crucial consideration in developing such systems, assuming the institutions building them come to understand the risks and the time investments needed to address them \citep{baum2016promotion}, they will have a large incentive to cooperate in their design rather than racing under time-pressure to build competing systems \citep{armstrong2016racing}.

Therefore, consider two nations---allies or adversaries---who must decide whether to cooperate in the deployment of an extremely powerful AI system.  Implicitly or explicitly, the resulting system would have to strike compromises when conflicts arise between the wishes of those nations.  How can they specify the degree to which that system would be governed by the distinctly held principles of each nation?  More mundanely, suppose a couple purchases a domestic robot.  How should the robot strike compromises when conflicts arise between the commands of its owners?

It is already an interesting and difficult problem to robustly align an AI system's values with those of a single \emph{single} human (or a group of humans in close agreement).  Inverse reinforcement learning (IRL) \citep{russell1998learning} \citep{ng2000algorithms} \citep{abbeel2004apprenticeship} and cooperative inverse reinforcement learning (CIRL) \citep{hadfield2016cooperative} represent successively realistic early approaches to this problem.  But supposing some adequate solution eventually exists for aligning the values of a machine intelligence with a single human decision-making unit, how should the values of a system serving \emph{multiple} decision-makers be ``aligned''?

One might hope to specify some extremely compelling ethical principle that everyone would immediately accept.  Realistically, however, disagreements will always exist.  Consider the general case of two parties---perhaps states, companies, or individuals---who might cooperatively build or purchase an AI system to serve them both.\footnote{The results of this paper all generalize directly from $2$ to $n$ players, but for concreteness of exposition, the two-player case is prioritized.}   If the parties cannot reach sufficient agreement as to what policy the AI should follow, cooperation may be less attractive than obtaining separate AI systems, one for each party.  At the individual level, non-cooperation could mean domestic disputes between domestic robots.  At the state level, it could mean an arms race between nations competing under time pressure to develop ever more powerful militarized AI systems, affording each nation less time to ensure the safety and validity of their respective systems.

Unless the prospect of cooperative AI ownership is made sufficiently attractive to the separate parties, the question of whose values the cooperatively owned system ``ought'' to serve is moot:  the parties will fall back on non-cooperative strategies---perhaps obtaining separate machines that will compete with each other---and the jointly owned system will not exist in the first place.   In addition, if the process of bargaining over the policy of a cooperatively owned system is difficult or complicated, the players are more likely to end negotiations and default to non-cooperative strategies. 

Conversely, if bargaining is made easier, players are more likely to reach cooperation.  The purpose of this paper is to begin formalizing the problem of negotiating over the policy of a machine intelligence, and to exhibit some early findings as to the nature of \emph{Pareto optimal policies}---policies which cannot be improved for one player without sacrifice by another---with the eventual aim of making cooperative outcomes easier to formulate, more attractive, and more likely to obtain.

\paragraph{Outline.} The paper is organized as follows.  Section 2 briefly outlines some standard choices of notation.  Section 3 formalizes the problem of obtaining a Pareto optimal policy for two distinct parties with common knowledge of distinct priors, derives a recursion that any such policy must follow, and contrasts that recursion with a more na\"{i}ve ``just add up a linear combination of the utility functions" approach.  The recursion implies two main qualitative insights about how a Pareto optimal policy, pursuant to a common-knowledge difference in opinion, must behave over time: (1) such a policy must (explicitly or implicitly) use each player's own beliefs in evaluating how well an action will serve that player's utility function, and (2) it must shift the relative priority it places on each player's expected utilities over time, by a factor proportional to how well that player's beliefs predict the machine's inputs.  Section 4 provides some further interpretation of these implications, in terms of bet-settling and moral realism.  Section 5 outlines subsequent work expected to be useful for enabling cooperative AI deployment in the future.  Finally, Section 6 provides concluding remarks targeted at readers who have finished the full paper.

\subsection{Related work}

\paragraph{Social choice theory.} The whole of social choice theory and voting theory may be viewed as an attempt to specify an agreeable formal policy to enact on behalf of a group.  Harsanyi's utility aggregation theorem \citep{harsanyi1980cardinal} suggests one form of solution: maximizing a linear combination of group members' utility functions.  The present work shows that this solution is inappropriate when players have different beliefs, and \thm{main} may be viewed as an extension of Harsanyi's form that accounts simultaneously for differing priors and the prospect of future observations.  Indeed, Harsanyi's form follows as a direct corollary of \thm{main} when players do share the same beliefs (\cor{harsanyi}).

\paragraph{Bargaining theory.} The formal theory of bargaining, as pioneered by Nash \citep{nash1950bargaining} and carried on by authors such as Myerson \citep{myerson1979incentive} \citep{myerson2013game} and Satterthwaite \citep{myerson1983efficient}, is also extremely topical.  Future investigation in this area might be aimed at generalizing their work to sequential decision-making settings, and this author recommends a focus on research specifically targeted at resolving conflicts.

\paragraph{Multi-agent systems.} There is ample literature examining multi-agent systems using sequential decision-making models.  \citet{shoham2008multiagent} survey various models of multiplayer games using an MDP to model each agent's objectives.  Chapter 9 of the same text surveys social choice theory, but does not account for sequential decision-making.  

\citet{zhang2014fairness} may be considered a sequential decision-making approach to social choice: they use MDPs to represent the decisions of players in a competitive game, and exhibit an algorithm for the players that, if followed, arrives at a Pareto optimal Nash equilibrium satisfying a certain fairness criterion.  Among the literature surveyed here, that paper is the closest to the present work in terms of its intended application: roughly speaking, achieving mutually desirable outcomes via sequential decision-making.  However, that work is concerned with an ongoing interaction between the players, rather than selecting a policy for a single agent to follow as in this paper.  


\paragraph{Multi-objective sequential decision-making.} There is also a good deal of work on Multi-Objective Optimization (MOO)  \citep{tzeng2011multiple}, including for sequential decision-making, where solution methods have been called Multi-Objective Reinforcement Learning (MORL).  For instance, \citet{gabor1998multi} introduce a MORL method called Pareto Q-learning for learning a set of a Pareto optimal polices for a Multi-Objective MDP (MOMDP).  \citet{soh2011evolving} define Multi-Reward Partially Observable Markov Decision Processes (MR-POMDPs), and use use genetic algorithms to produce non-dominated sets of policies for them.  \citet{roijers2015point} refer to the same problems as Multi-objective POMDPS (MOPOMDPs), and provide a bounded approximation method for the optimal solution set for all possible weightings of the objectives.  \citet{wang2014multi} surveys MORL methods, and contributes Multi-Objective Monte-Carlo Tree Search (MOMCTS) for discovering multiple Pareto optimal solutions to a multi-objective optimization problem.   \citet{wray2015multi} introduce Lexicographic Partially Observable Markov Decision Process (LPOMDPs), along with two accompanying solution methods.

However, none of these or related works address scenarios where the objectives are derived from players with differing beliefs, from which the priority-shifting phenomenon of \thm{main} arises.  Differing beliefs are likely to play a key role in negotiations, so for that purpose, the formulation of multi-objective decision-making adopted here is preferable.

\section{Notation}
The reader is invited to skip this section and refer back as needed; an effort has been made to use notation that is intuitive and fairly standard, following \citet{pearl2009causality}, \citet{hutter2003gentle}, and \citet{orseau2012space}.

Random variables are denoted by uppercase letters, e.g., $S_1$, and lowercase letters, e.g., $s_1$, are used as indices ranging over the values of a variable, as in the equation
\[
\EE[S_1] = \sum_{s_1} \PP(s_1)\cdot s_1.
\]

Sequences are denoted by overbars, e.g., given a sequence $(s_1,\ldots,s_n)$, $\seq s$ stands for the whole sequence.   Subsequences are denoted
by subscripted inequalities, so e.g., $s_{<4}$ stands for $(s_1,s_2,s_3)$,
and $s_{\le 4}$ stands for $(s_1,s_2,s_3,s_4)$.

\section{Two agents building a third}

Consider, informally, a scenario wherein two players --- perhaps individuals, companies, or states --- are considering cooperating to build or otherwise obtain a machine that will then interact with an environment on their behalf.\footnote{The results here all generalize from two players to $n$ players being combined successively in any order, but for clarity of exposition, the two person case is prioritized.} In such a scenario, the players will tend to bargain for ``how much'' the machine will prioritize their separate interests, so to begin, we need some way to quantify ``how much'' each player is prioritized.

For instance, one might model the machine as maximizing the expected value, given its observations, of some utility function $U$ of the environment that equals a weighted sum 
\begin{equation}\label{eqn:harsanyi}
w^1U^1 + w^2U^2
\end{equation}
of the players' individual utility functions $U^1$ and $U^2$, as Harsanyi's social aggregation theorem \citep{harsanyi1980cardinal} recommends.  Then the bargaining process could focus on choosing the values of the weights $w^i$.  

However, this turns out to be a bad idea.  As we shall see in \prop{impossibility}, this solution form is not generally compatible with Pareto optimality when agents have different beliefs.  Harsanyi's setting does not account for agents having different priors, nor for decisions being made sequentially, after future observations.  In such a setting, we need a new form of solution, exhibited here along with a recursion that characterizes optimal solutions by a process analogous to, but meaningfully different from, Bayesian updating.

\subsection{A POMDP formulation}

Let us formalize the machine's decision-making situation using the structure of a Partially Observable Markov Decision Process (POMDP), as depicted by the Bayesian network in \fig{pomdp}.  (See \citet{russell2003artificial} for an introduction to POMDPs, and \citet{darwiche2009modeling} for an introduction to Bayesian networks.)

At each point in time $i$, the machine will have a policy $\pi_i$ that for each possible sequence of observations $o_{\le i}$ and past actions $a_{<i}$, returns a distribution $\pi_i(- \mid \hist i)$ on actions $a_i$, which will then be used to generate an action $a_i$ with probability $\pi(a_i \mid \hist i)$.  In \fig{pomdp}, the part of the Bayes net governed by the machine's policy is highlighted in green.

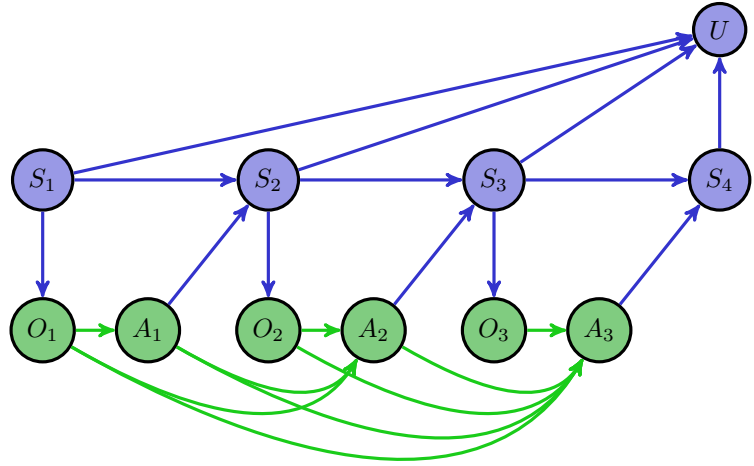
\begin{figure*}
\pgfmathtruncatemacro{\len}{2}
\pgfmathtruncatemacro{\lenn}{\len + 1}
\pgfmathtruncatemacro{\lennn}{\len + 2}
\begin{center}
\begin{tikzpicture}[node distance=1cm, auto, very thick, >=stealth']

  \foreach \x in {1,...,\lennn}{
        \node [draw, circle, fill=envNodeColor]  (s\x) at (3*\x,2) {$S_\x$};
        \node [color=envArrowColor, below left = -0.1cm and 0.3cm of s\x] (T\x) {};
        }

  \foreach \x in {1,...,\lenn}{
       \node [draw, circle, fill=intNodeColor]  (o\x) at (3*\x,0) {$O_\x$};
       \node [draw, circle, fill=intNodeColor]  (a\x) at (3*\x+1.4,0) {$A_\x$};
       }

  \foreach \x in {1,...,\lenn}{
      \pgfmathtruncatemacro{\xn}{\x +1 }
      \draw[->,envArrowColor] (s\x) to node [right] {} (o\x);
      \draw[->,intArrowColor] (o\x) to node {} (a\x);
      \draw[->,envArrowColor] (a\x) to node {} (s\xn);
      \draw[->,envArrowColor] (s\x)--(s\xn);
      }
      
  \foreach \x in {1,...,\len}{
      \pgfmathtruncatemacro{\xn}{\x +1 };
      \foreach \y in {\xn,...,\lenn}{
          \draw[->,intArrowColor] (o\x) to [out=330,in=240] node {} (a\y);
          \draw[->,intArrowColor] (a\x) to [out=330,in=240] node {} (a\y);
          }
      }
  
  \node [draw, circle, fill=envNodeColor] (u) at (3*\lennn,4) {$U$};

  \foreach \x in {1,...,\lennn}{
          \draw[->,envArrowColor] (s\x) to node {} (u);  
  }

\end{tikzpicture}
\end{center}
\caption{A POMDP of length $n=\lenn$}
\label{fig:pomdp}
\end{figure*}

\paragraph{Common knowledge assumptions.}  It is assumed that the players will have common knowledge of the policy $\pi = (\pi_1,\ldots,\pi_n)$ they select for the machine to implement, but that the players may have different beliefs about how the environment works, and of course different utility functions.  It is also assumed that the players have common knowledge of one another's posterior.  

\emph{This assumption is critical.}  During a bargaining process, one should expect players' beliefs to update in response to one another's behavior.   Assuming common knowledge of posteriors means that the players have reached an equilibrium where, each knowing what the other believes, does not wish to further update her own beliefs.\footnote{It is enough to assume the players have reached a ``persistent disagreement'' that cannot be mediated by the machine in some way.  Future work should design solutions for facilitating the process of attaining common knowledge, or to obviate the need to assume it.}

We encode each player $j$'s outlook as a POMDP, $D^j = (\Ss^j,\Aa,T^j, U^j, \Oo, \Omega^j, n)$, which simultaneously represents that player's beliefs about the environment, and the player's utility function.
\begin{itemize}
\item $\Ss^j$ represents a set of possible states $s$ of the environment,
\item $\Aa$ represents the set of possible actions $a$ available to the machine,
\item $T^j$ represents the conditional probabilities player $j$ believes will govern the environment state transitions, i.e., $\PP^j(s_{i+1}\mid s_i a_i)$,
\item $U^j$ represents player $j$'s utility function from sequences of environmental states $(s_1,\ldots,s_n)$ to $\RR$; for the sake of generality, $U^j$ is \emph{not assumed} to be additive over time, as reward functions often are, 
\item $\Oo$ represents the set of possible observations $o$ of the machine, 
\item $\Omega^j$ represents the conditional probabilities player $j$ believes will govern the machine's observations, i.e., $\PP^j(o_i\mid s_i)$, and
\item $n$ is the number of time steps.
\end{itemize}
Thus, player $j$'s subjective probability of an outcome $(\seq s, \seq o, \seq a)$, for any $\seq s \in (\Ss^j)^n$, is given by a probability distribution $\PP^j$ that takes $\pi$ as a parameter:
\begin{equation}\label{eqn:pomdp}
\PP^j(\seq s, \seq o, \seq a ; \pi) := \PP^j(s_1) \cdot \prod_{i=1}^{n} 
\PP^j(o_i \mid s_i)\, \pi(a_i \mid \hist i)\, \PP^j(s_{i+1} \mid s_ia_i)
\end{equation}

As such, the POMDPs $D^1$ and $D^2$ are ``compatible'' in the following sense:

\begin{definition}[Compatible POMDPs]  We say that two POMDPs, $D^1$ and $D^2$, are \emph{compatible} if any policy for one may be viewed as a policy for the other, i.e., they have the same set of actions $\Aa$ and observations $\Oo$, and the same number of time steps $n$.
\end{definition}

\subsection{Pareto optimal policies}

In this context, where a policy $\pi$ may be evaluated relative to more than one POMDP, we use superscripts to represent which POMDP is governing the probabilities and expectations, e.g.,
\[
\EE^j[U^j; \pi] := \sum_{\seq s \in (\Ss^j)^n} \PP^j(\bar s; \pi) U^j(\bar s)
\]
represents the expectation in $D^j$ of the utility function $U^j$, assuming policy $\pi$ is followed.
\begin{definition}[Pareto optimal policies]  A policy $\pi$ is \emph{Pareto optimal} for a compatible pair of POMDPs $(D^1,D^2)$ if for any other policy $\pi'$, either
\[
\EE^1[U^1;\pi] \ge \EE^1[U^1;\pi'] \textrm{\quad or \quad} \EE^2[U^2;\pi] \ge \EE^2[U^2;\pi'].
\]
\end{definition}
It is assumed that, during negotiation, the players will be seeking a Pareto optimal policy for the machine to follow, relative to the POMDPs $D^1$ and $D^2$ describing each player's outlook.

\paragraph{Policy mixing assumption.}  It is also assumed that during the agent's first action (or before it), the agent has the ability to generate and store some random numbers in the interval $[0,1]$, called a random seed, that will not affect the environment except through other features of its actions.  Then, given any two policies $\pi$ and $\pi'$ and a scalar $p\in[0,1]$ we may construct a third policy, 
\[
p\pi + (1-p)\pi',
\]
 that decides with probability $p$ (before receiving any inputs) to use policy $\pi$ for generating all of its future actions, and otherwise uses policy $\pi'$.  (This is a ``once and for all'' decision; the agent does not flip-flop between $\pi$ and $\pi'$ once the decision is made.)  Mixtures of more than two policies are defined similarly.  With this formalism, whenever $\sum_k \alpha_k = 1$ and each $\alpha_k\ge 0$, we have
\begin{equation}\label{eqn:linearity}
\EE^j\left[U^j ; \sum_k \alpha_k \pi_k\right] = \sum_k \alpha_k \EE^j[U^j ; \pi_k].
\end{equation}

\begin{lemma}\label{lem:pareto}
A policy $\pi$ is Pareto optimal to players $1$ and $2$ if and only if there exist weights $w^1,w^2\geq 0$ with $w_1+w_2=1$ such that
\begin{equation}\label{eqn:pareto}
\pi \in \argmax_{\pi^* \in\Pi}\left(w^1\EE^1[U^1;\pi^*] + w^2\EE^2[U^2; \pi^*]\right)
\end{equation}
\end{lemma}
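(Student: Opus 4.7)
The plan is a classical scalarization argument built around a separating-hyperplane application to the set of attainable expected-utility pairs
\[
V := \{(\EE^1[U^1;\pi^*], \EE^2[U^2;\pi^*]) : \pi^* \in \Pi\} \subseteq \RR^2.
\]
The first step is to observe that $V$ is \emph{convex}: by the policy mixing assumption together with \eqn{linearity}, any convex combination of two pairs in $V$ is itself attained by the corresponding mixture of the two source policies. This convexity is the only structural input the proof needs beyond the definitions.

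For the easy (``if'') direction, I would argue contrapositively. Suppose $\pi$ maximizes $w^1 \EE^1[U^1;\pi^*] + w^2 \EE^2[U^2;\pi^*]$ for some nonnegative weights summing to $1$. If $\pi$ were not Pareto optimal, some $\pi'$ would satisfy $\EE^j[U^j;\pi'] > \EE^j[U^j;\pi]$ for both $j$; since at least one weight is strictly positive, the strict inequalities would lift to the scalarized sum, contradicting maximality. This half uses neither convexity nor the separating-hyperplane theorem.

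For the substantive (``only if'') direction, let $v_\pi := (\EE^1[U^1;\pi], \EE^2[U^2;\pi])$. Pareto optimality of $\pi$ is exactly the statement that the open upper-right quadrant $Q := \{(x,y) : x > v_\pi^1,\ y > v_\pi^2\}$ is disjoint from $V$. Because $V$ is convex and $Q$ is open and convex, the separating hyperplane theorem provides a nonzero normal $(w^1, w^2)$ and a scalar $c$ such that $V \subseteq \{w^1 x + w^2 y \le c\}$ and $Q \subseteq \{w^1 x + w^2 y \ge c\}$. Nonnegativity of the weights then follows because $Q$ is unbounded in both positive-coordinate directions: a negative $w^j$ would force $w^1 x + w^2 y$ to drop below $c$ somewhere in $Q$. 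Taking the limit from $Q$ to its corner $v_\pi$, combined with $v_\pi \in V$, forces $w^1 v_\pi^1 + w^2 v_\pi^2 = c$, so $\pi$ actually attains the maximum of the scalarized objective on $V$. A final rescaling by $1/(w^1 + w^2)$ (possible since $(w^1, w^2) \ne 0$) yields the required normalization.

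The main obstacle I anticipate is the joint task of extracting \emph{nonnegative} weights and upgrading separation into attainment of the maximum; both issues are resolved by the specific geometry of the quadrant $Q$ (its unboundedness in the positive directions and the presence of $v_\pi$ on its topological boundary). A minor subtlety worth flagging is that no closedness hypothesis on $V$ is needed, because separation between a convex set and a disjoint open convex set is always achievable.
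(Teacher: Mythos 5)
Your proposal is correct and follows essentially the same route as the paper: both rest on the convexity of the attainable expected-utility set (via the policy mixing assumption and \eqn{linearity}) and the standard scalarization/supporting-hyperplane characterization of the Pareto boundary. The paper simply cites this fact for the closed convex polytope $\image(f)$ without proof, whereas you supply the separating-hyperplane details explicitly and correctly observe that closedness is not actually needed since $v_\pi \in V$ already forces attainment of the maximum.
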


\begin{proof}
The mixing assumption gives the space of policies $\Pi$ the structure of a convex space that the maps $\EE^j[U^j; -] $ respect by \eqn{linearity}.  This ensures that the image of the map $f:\Pi\to\RR^2$ given by
\[
f(\pi) := \left(\EE^1[U^1;\pi],\; \EE^2[U^2;\pi]\right)
\]
is a closed, convex polytope.  As such, a point $(x,y)$ lies on the Pareto boundary of $\image(f)$ if and only if there exist nonnegative weights $(w^1,w^2)$, not both zero, such that 
\[
(x,y) \in \argmax_{(x^*,y^*)\in \image(f)} \left(w^1x^* + w^2y^*\right)
\]
After normalizing $w^1+w^2$ to equal $1$, this implies the result.
\end{proof}

\subsection{A reprioritization mechanism that resembles Bayesian updating}

We shall soon see that any Pareto optimal policy $\pi$ must favor, as time progresses, optimizing the \emph{utility} of whichever player's \emph{beliefs} were a better predictor of the machine's inputs.  This phenomenon turns out to algebraically resemble Bayesian updating, but is quite different in its meaning.  Nonetheless, it is most easily shown to occur by a precise analogy to Bayesian updating in a third POMDP constructed from the outlooks of players 1 and 2, as follows.

For any weights, $w^1,w^2\ge 0$ with $w^1+w^2=1$, we define a new POMDP that works by flipping a $(w^1,w^2)$-weighted coin, and then running $D^1$ or $D^2$ thereafter, according to the coin flip.  Explicitly, 

\begin{definition}[POMDP mixtures]\label{defn:mixture} Let $D^1$ and $D^2$ be compatible POMDPs, with parameters 
$D^j = (\Ss^j,\Aa,T^j, U^j, \Oo, \Omega^j, n)$.  Define a new POMDP compatible both, denoted $D=w^1D^1 + w^2D^2$, with parameters 
$D^j = (\Ss,\Aa,T, U, \Oo, \Omega, n)$, as follows:
\begin{itemize}
\item $\Ss:= \{(j,s) \mid j\in\{1,2\}, s\in\Ss^j\}$,
\item The environmental transition probabilities $T$ are given by
\begin{align*}
\PP\left((j,s_1)\right) :=&\: w^j\cdot \PP^j(s_1)\\
\intertext{for any initial state $s_1\in\Ss^j$, and thereafter,}
\PP\left((j',s_{i+1}) \mid (j,s_i), a_i \right) :=&\;
\begin{cases}
\PP^j\left(s_{i+1} \mid s_ia_i\right) &\mbox{ if $j'=j$}\\
0 & \mbox { if $j'\neq j$}
\end{cases}
\end{align*}
Hence, the value of $j$ will be constant over time, so a full history for the environment may be represented by a pair
\[
(j,\seq s) \in \{1\}\times (\Ss^1)^n \cup \{2\}\times(\Ss^2)^n.
\]
Let $\bool $ denote the boolean random variable that equals whichever constant value of $j$ obtains, so then
\[
\PP(\bool =j) = w^j
\]
\item The utility function $U$ is given by
\[
U(j,\seq s) := U^j(\seq s)
\]
\item The observation probabilities $\Omega$ are given by
\[
\PP\left(o_i \mid (j,s_i)\right) := \PP(\bool =j) \cdot \PP^j(o_i \mid s_i)
\]
In particular, the policy does not observe directly whether $j=1$ or $j=2$.
\end{itemize}

\end{definition}
\begin{figure*}
\pgfmathtruncatemacro{\len}{2}
\pgfmathtruncatemacro{\lenn}{\len + 1}
\pgfmathtruncatemacro{\lennn}{\len + 2}
\begin{center}
\begin{tikzpicture}[node distance=1cm, auto, very thick, >=stealth']
        \node [draw, circle, fill=boolNodeColor]  (b) at (3,4) {$\bool $};
        
  \foreach \x in {1,...,\lennn}{
        \node [draw, circle, fill=envNodeColor]  (s\x) at (3*\x,2) {$S_\x$};
        \draw [->,boolArrowColor] (b) to node {} (s\x);
        \node [color=envArrowColor, below left = -0.1cm and 0.3cm of s\x] (T\x) {};
        }

  \foreach \x in {1,...,\lenn}{
       \node [draw, circle, fill=intNodeColor]  (o\x) at (3*\x,0) {$O_\x$};
       \draw [->,boolArrowColor] (b) to [out=270,in=145] node [right] {} (o\x);
       \node [draw, circle, fill=intNodeColor]  (a\x) at (3*\x+1.4,0) {$A_\x$};
       }

  \foreach \x in {1,...,\lenn}{
      \pgfmathtruncatemacro{\xn}{\x +1 }
      \draw[->,envArrowColor] (s\x) to node [right] {} (o\x);
      \draw[->,intArrowColor] (o\x) to node {} (a\x);
      \draw[->,envArrowColor] (a\x) to node {} (s\xn);
      \draw[->,envArrowColor] (s\x)--(s\xn);
      }
      
  \foreach \x in {1,...,\len}{
      \pgfmathtruncatemacro{\xn}{\x +1 };
      \foreach \y in {\xn,...,\lenn}{
          \draw[->,intArrowColor] (o\x) to [out=330,in=240] node {} (a\y);
          \draw[->,intArrowColor] (a\x) to [out=330,in=240] node {} (a\y);
          }
      }
  
  \node [draw, circle, fill=envNodeColor] (u) at (3*\lennn,4) {$U$};
  \draw[->,boolArrowColor] (b) to node [right] {} (u);

  \foreach \x in {1,...,\lennn}{
          \draw[->,envArrowColor] (s\x) to node {} (u);  
  }
\end{tikzpicture}
\end{center}
\caption{A POMDP (mixture) of length $n=\lenn$ initialized by a Boolean $\bool $}
\label{fig:pomdp2}
\end{figure*}
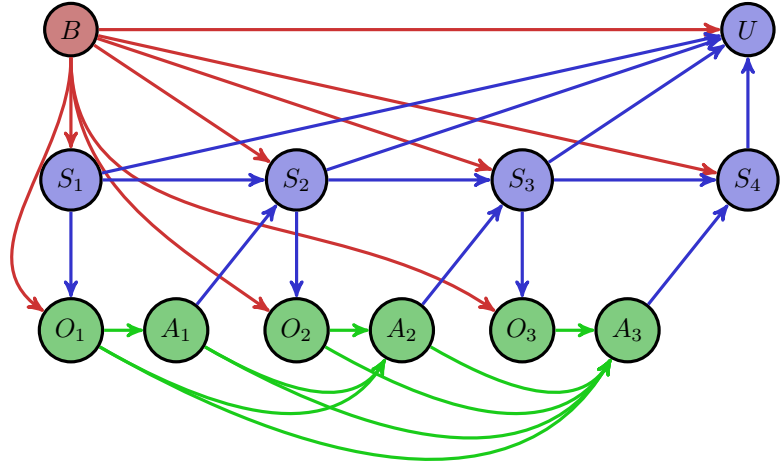

The POMDP mixture $D=w^1D^1 + w^2D^2$ can be depicted with a Bayes net by adding an additional environmental node for $\bool $ in the diagram of $D^1$ and $D^2$ (see \fig{pomdp2}).  Indeed, given any policy $\pi$, the expected payoff of $\pi$ in $w^1D^1+w^2D^2$ is exactly
\begin{align*}
&\; \PP(\bool =1)\cdot\EE(U \mid \bool =1 ; \pi) + \PP(\bool =2)\cdot\EE(U \mid \bool =2 ; \pi)\\
= &\; w^1\EE^2(U^1; \pi) + w^2\EE^2(U^2 ; \pi)
\end{align*}
Therefore, using the above definitions, \lem{pareto} may be restated in the following equivalent form:

\begin{lemma}\label{lem:mixture}
Given a pair $(D^1,D^2)$ of compatible POMDPs, a policy $\pi$ is Pareto optimal for that pair if and only if there exist weights $w^j$ such that $\pi$ is an optimal policy for the single POMDP given by $w^1D^1+w^2D^2$.  
\end{lemma}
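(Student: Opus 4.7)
The strategy is to reduce directly to \lem{pareto} by showing that, for every policy $\pi$, the expected utility of $\pi$ in the mixture POMDP $w^1 D^1 + w^2 D^2$ coincides with the weighted sum $w^1 \EE^1[U^1;\pi] + w^2 \EE^2[U^2;\pi]$. Once this identity is established, optimizing the mixture is the same as maximizing the Harsanyi-style objective in \eqn{pareto}, so the existence of suitable weights transfers between the two formulations verbatim.

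The main computation is a conditional-expectation argument on the hidden boolean $\bool$ of \defn{mixture}. First I would apply the law of total expectation,
\[
\EE[U;\pi] \;=\; \sum_{j\in\{1,2\}} \PP(\bool=j)\cdot \EE[U \mid \bool=j;\pi],
\]
and then verify that, conditional on $\bool=j$, the dynamics of the mixture POMDP restrict exactly to those of $D^j$: the off-diagonal transition entries are zero so $\bool$ is constant in time, the initial-state distribution reduces to $\PP^j(s_1)$, the transitions become $\PP^j(s_{i+1}\mid s_i a_i)$, and (after dividing out the constant factor $\PP(\bool=j)$ built into the definition) the observation kernel becomes $\PP^j(o_i\mid s_i)$. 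Since $U(j,\seq s)=U^j(\seq s)$ by construction, each conditional expectation equals $\EE^j[U^j;\pi]$, and with $\PP(\bool=j)=w^j$ the desired identity follows.

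Plugging this identity into \lem{pareto} closes the argument: $\pi$ is Pareto optimal for $(D^1,D^2)$ iff it maximizes $w^1 \EE^1[U^1;\pi^*]+w^2 \EE^2[U^2;\pi^*]$ over $\pi^*\in\Pi$ for some admissible weights $w^j$, iff it is an optimal policy for $w^1 D^1+w^2 D^2$. The part I expect to require the most care, rather than any real cleverness, is the bookkeeping in the middle step: checking that the same policy class $\Pi$ applies to all three POMDPs (which is exactly what compatibility guarantees and what makes the $\argmax$ sets directly comparable), and that $\bool$ really does remain hidden from $\pi$ so that a policy for the mixture POMDP is nothing more than a policy for $D^1$ or $D^2$. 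Beyond these checks, no ideas are needed past \lem{pareto} and the explicit construction in \defn{mixture}.
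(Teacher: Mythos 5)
Your proposal is correct and follows essentially the same route as the paper: the paper also establishes the identity $\EE[U;\pi]=\PP(\bool=1)\EE[U\mid\bool=1;\pi]+\PP(\bool=2)\EE[U\mid\bool=2;\pi]=w^1\EE^1[U^1;\pi]+w^2\EE^2[U^2;\pi]$ immediately before the lemma and then treats the lemma as a restatement of \lem{pareto}. Your only addition is the explicit bookkeeping that conditioning on $\bool=j$ recovers the dynamics of $D^j$ (including quietly correcting the stray $\PP(\bool=j)$ factor in the observation kernel of \defn{mixture}), which the paper leaves implicit.
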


\subsection{A recursive Pareto optimality condition}

Expressed in the form of \eqn{pareto}, it might not be clear how a Pareto optimal policy makes use of its observations over time, aside from storing them in memory.  For example, is there any sense in which the machine carries ``beliefs" about the environment that it ``updates" at each time step?  \lem{mixture} allows us to answer this and related questions by translating theorems about single POMDPs into theorems about compatible pairs of POMDPs.  

If $\pi$ is an optimal policy for a single POMDP, at any time step $i$, optimality of the action distribution $\pi_i(-\mid\hist{i})$ can be characterized without reference to the previous policy components 
$(\pi_1,\ldots,\pi_{i-1})$, nor to $\pi_i(-\mid\althist{i})$ for any alternate history $\althist{i}$.\footnote{This fact can used to justify why the ``sunk cost'' fallacy is indeed a fallacy.}   To express this claim in an equation, Pearl's ``$do()$" notation \citep{pearl2009causality} comes in handy:
\begin{definition}[``do'' notation]
\[
\PP^j(\seq o \mid do(\seq a)) := \sum_{\seq s \in (\Ss^j)^n} \PP^j(s_1) \cdot \prod_{i=1}^n
\PP^j(o_i \mid s_i)\, \PP^j(s_{i+1} \mid s_ia_i)
\]
\end{definition}

This expression is the same as the probability of $(\seq o, \seq a)$ when $\pi$ is the constant policy that places probability $1$ on the action sequence $\seq a$.

\begin{proposition}[Classical separability]\label{prop:separability}
If $D$ is a POMDP described by conditional probabilities $\PP(-\mid -)$ and utility function $U$ (as in \eqn{pomdp}), then a policy $\pi$ is optimal for $D$ if and only if for each time step $i$ and each observation/action history $\hist{i}$, the action distribution $\pi_i(-\mid \hist{n})$ satisfies the following backward recursion:
\begin{align*}
& \pi_i(-\mid\hist{i}) \in \argmax_{\alpha\in\Delta A} \left(\vphantom{\left(()\right)}\right.\\
& \left.\quad \PP(\dohist{i})\cdot \EE[U \mid \hist{i} ;\; a_n\sim \alpha; \; \pi_{i+1},\ldots,\pi_{n}]\right)
\end{align*}
This characterization of $\pi_i(\hist{i})$ does not refer to $\pi_1,\ldots,\pi_{i-1}$, nor to 
$\pi_i(\althist{i})$ for any alternate history $\althist{i}$.
\end{proposition}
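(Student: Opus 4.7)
The plan is to use a one-shot decomposition of the expected utility that reduces the joint policy optimization at time $i$ to an independent per-history optimization. First, I would condition on the history at time $i$ to write
\[
\EE[U; \pi] \;=\; \sum_{\hist{i}} \PP(o_{\le i}, a_{<i}; \pi) \cdot \EE[U \mid \hist{i}; \pi],
\]
and then factor the reaching-probability, using the Bayes-net structure in \eqn{pomdp}, as
\[
\PP(o_{\le i}, a_{<i}; \pi) \;=\; \PP(\dohist{i}) \cdot \prod_{k<i} \pi(a_k \mid \hist{k}).
\]
This factorization is immediate from summing the POMDP joint in \eqn{pomdp} over the hidden states $s_{\le i}$: the $\pi$-factors pull outside the state sum, leaving $\PP(\dohist{i})$ as the do-calculus residue.

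Next, I would read off from these two displays exactly how each term depends on $\pi$. The factor $\PP(\dohist{i})$ is policy-free; the factor $\prod_{k<i} \pi(a_k \mid \hist{k})$ depends only on $\pi_1,\ldots,\pi_{i-1}$; and the conditional expectation $\EE[U \mid \hist{i}; \pi]$ depends only on $\pi_{i+1},\ldots,\pi_n$ and on the single distribution $\pi_i(- \mid \hist{i})$---not on $\pi_i(- \mid \althist{i})$ for any alternate history. Consequently, fixing every component of $\pi$ except $\pi_i(- \mid \hist{i})$, the total $\EE[U;\pi]$ is an affine function of that one distribution, whose slope term is exactly
\[
\PP(\dohist{i}) \cdot \prod_{k<i} \pi(a_k \mid \hist{k}) \cdot \EE[U \mid \hist{i};\; a_i \sim \alpha;\; \pi_{i+1},\ldots,\pi_n].
\]
Dropping the constant nonnegative multiplier $\prod_{k<i} \pi(a_k \mid \hist{k})$ and taking $\argmax$ over $\alpha \in \Delta\Aa$ gives exactly the recursion claimed, and the absence of $\pi_{<i}$ and of the alternate-history components $\pi_i(-\mid\althist{i})$ from the recursion is then immediate from this separation.

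For the $(\Leftarrow)$ direction, I would build an optimal policy by applying the display above from $i=n$ downward: each argmax fixes $\pi_i(- \mid \hist{i})$ to maximize its own per-history contribution, and those contributions sum to $\EE[U;\pi]$. For the $(\Rightarrow)$ direction, contrapositively, if the recursion fails at some $(i,\hist{i})$ with $\PP(\dohist{i}) \cdot \prod_{k<i} \pi(a_k\mid\hist{k}) > 0$, swapping in the argmax strictly raises $\EE[U;\pi]$, contradicting optimality. The main obstacle---and the only real subtlety---is the off-policy case $\PP(\dohist{i}) > 0$ but $\prod_{k<i} \pi(a_k\mid\hist{k}) = 0$: the action distribution at an unreachable history is not pinned down by optimality, so the recursion need not literally hold there, but any optimal $\pi$ can be modified on such histories to satisfy it without changing $\EE[U;\pi]$. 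This is precisely why the $\PP(\dohist{i})$ factor (rather than the full $\PP(\hist{i};\pi)$) appears on the left of the argmax---it keeps the condition nonvacuous exactly on histories that can matter under \emph{some} policy, and vacuous on those that cannot matter under any.
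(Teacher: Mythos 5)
Your argument is correct and is precisely the standard dynamic-programming / one-shot-deviation argument that the paper invokes without writing out (its entire proof reads ``This is a standard property of POMDP solutions''): decompose $\EE[U;\pi]$ over time-$i$ histories, factor the reaching probability into the policy-free term $\PP(\dohist{i})$ times $\prod_{k<i}\pi(a_k\mid\hist{k})$, observe that the objective is affine in each coordinate $\pi_i(-\mid\hist{i})$ with nonnegative coefficient, and run the backward induction. Your closing remark is also a genuine point the proposition's ``only if'' direction glosses over --- at a history with $\PP(\dohist{i})>0$ that is unreachable because some earlier $\pi_k(a_k\mid\hist{k})=0$, optimality does not literally force the argmax condition, only that $\pi$ can be modified there at no cost --- and you handle that edge case correctly.
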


\begin{proof}
This is a standard property of POMDP solutions.
\end{proof}

It turns out that Pareto optimality can be characterized in a similar way by backward recursion from the final time step.  The resulting recursion reveals a pattern in how the weights on the players' conditionally expected utilities must change over time, which is the main result of this paper:

\begin{theorem}[Pareto optimal policy recursion]\label{thm:main}
Given a pair $(D^1,D^2)$ of compatible POMDPs of length $n$, a policy $\pi$ is Pareto optimal if and only if its components $\pi_i$ for $i\le n$ satisfy the following backward recursion for some pair of weights $w^1,w^2\geq 0$ with $w^1+w^2=1$:
\begin{align*}
&\pi^i(-\mid\hist{i}) \in \argmax_{\alpha\in\Delta A} \left(\vphantom{\left(\left(\right)\right)}\right.\\
& \left.w^1 \PP^1\left(\dohist{i}\right)\cdot \EE^1[U^1 \mid \hist{i} a_i ;\; a_i\sim \alpha;\; \pi_{i+1},\ldots,\pi_n]\right.\\
+ & \left.w^2 \PP^2\left(\dohist{i}\right)\cdot \EE^2[U^2 \mid \hist{i} a_i;\; a_i\sim \alpha;\; \pi_{i+1},\ldots,\pi_n] \right)
\end{align*}
In words, to achieve Pareto optimality, the machine must
\begin{enumerate}
\item use each player's own beliefs when estimating the degree to which a decision favors that player's utility function, and
\item shift the relative priorities of the players' expected utilities in the machine's decision objective over time, by a factor proportional to how well the players predict the machine's inputs.
\end{enumerate}
\end{theorem}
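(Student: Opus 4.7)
The plan is to reduce the two-player statement to the single-POMDP case via \lem{mixture}, apply the classical backward recursion of \prop{separability} inside the mixture $D_w := w^1 D^1 + w^2 D^2$, and then unpack the resulting one-POMDP recursion into the two-player form of \thm{main} using the structure of $D_w$ specified in \defn{mixture}. By \lem{mixture}, $\pi$ is Pareto optimal for $(D^1,D^2)$ if and only if there exist weights $w^1,w^2 \ge 0$ summing to $1$ with $\pi$ optimal in $D_w$, and \prop{separability} then expresses this optimality as a backward recursion of the form
\[
\pi_i(-\mid \hist{i}) \in \argmax_{\alpha \in \Delta \Aa}\; \PP_w(\dohist{i})\cdot \EE_w[U \mid \hist{i} a_i;\, a_i \sim \alpha;\, \pi_{i+1},\ldots,\pi_n].
\]
The whole task is then to show that the scalar being maximized here agrees with the one in \thm{main} up to a factor that does not depend on $\alpha$.

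For this I condition on the boolean $\bool$ of \defn{mixture}. Since $\bool$ has no parents in \fig{pomdp2}, marginalizing over $\bool$ commutes with the $do$-intervention on past actions, and the law of total probability gives $\PP_w(\dohist{i}) = \sum_j w^j\, \PP^j(\dohist{i})$. The tower property, together with the fact that conditional on $\bool = j$ the POMDP $D_w$ behaves exactly like $D^j$ and $U$ restricts to $U^j$, yields
\[
\EE_w[U \mid \hist{i} a_i;\, a_i \sim \alpha;\, \pi_{i+1},\ldots,\pi_n] = \sum_j \PP_w(\bool = j \mid \hist{i})\cdot \EE^j[U^j \mid \hist{i} a_i;\, a_i \sim \alpha;\, \pi_{i+1},\ldots,\pi_n].
\]
A short Bayes-rule computation, in which the policy factors $\prod_{k<i} \pi_k(a_k \mid \hist{k})$ cancel between numerator and denominator because they are independent of $j$, gives $\PP_w(\bool = j \mid \hist{i}) = w^j\, \PP^j(\dohist{i}) / \PP_w(\dohist{i})$. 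Multiplying the two displays cancels $\PP_w(\dohist{i})$ and produces exactly the two-term weighted sum inside the $\argmax$ in \thm{main}; since the cancelled factor is independent of $\alpha$, the two $\argmax$ problems have the same maximizers.

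The main obstacle is this Bayes-rule bookkeeping: one must carefully track how the $do$-notation interacts with marginalization over $\bool$ and how the observation probabilities of \defn{mixture} contribute under conditioning on $\bool = j$, before identifying and cancelling the $\alpha$-independent denominator $\PP_w(\dohist{i})$. Once the pointwise equivalence of the two recursions is established, both directions of the ``if and only if'' follow at once: forwards via \lem{mixture} followed by \prop{separability} applied to $D_w$, and backwards by reconstituting the single-POMDP recursion in $D_w$, invoking \prop{separability} to conclude optimality there, and then \lem{mixture} to conclude Pareto optimality. The qualitative claims (1) and (2) are then visible by inspection of the resulting objective: the inner expectation in the $j$-th summand is evaluated under $\PP^j$, and the effective weight on player $j$ becomes $w^j\, \PP^j(\dohist{i})$, which grows in direct proportion to how well player $j$'s beliefs predicted the observed history.
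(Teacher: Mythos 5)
Your proposal is correct and follows essentially the same route as the paper: reduce to classical optimality in the mixture POMDP $w^1D^1+w^2D^2$ via \lem{mixture}, apply the backward recursion of \prop{separability} there, and unpack the resulting objective by conditioning on the boolean $\bool$ of \defn{mixture}. The only difference is one of detail---you make the Bayes-rule cancellation of the $\alpha$-independent factor $\PP\left(\dohist{i}\right)$ explicit, whereas the paper simply asserts that the expression inside the $\argmax$ equals the two-term weighted sum.
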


\begin{proof}
By \lem{mixture}, the Pareto optimality of $\pi$ for $(D^1,D^2)$ is equivalent to its classical optimality for
$w^1D^1 + w^2D^2$ for some $(w^1,w^2)$, which by \prop{separability} is equivalent to satisfying the following backward recursion (writing $\PP$ and $\EE$ for probabilities and expectations in $w^1D^1 + w^2D^2$):
\begin{align*}
&\pi^i(-\mid\hist{i}) \in \argmax_{\alpha\in\Delta A} \left(\vphantom{\left(\left(\right)\right)}\right.\\
& \left.\PP(\bool =1)\cdot \PP\left(\dohist{i}\right)\cdot \EE[U \mid \hist{i} a_i ;\; a_i\sim \alpha;\; \pi_{i+1},\ldots,\pi_n]\right.\\
+ & \left.\PP(\bool =2)\cdot \PP\left(\dohist{i}\right)\cdot \EE[U \mid \hist{i} a_i;\; a_i\sim \alpha;\; \pi_{i+1},\ldots,\pi_n] \right).
\end{align*}
By \defn{mixture}, the expression inside the $\argmax$ equals 
\begin{align*}
& w^1 \PP^1\left(\dohist{i}\right)\cdot \EE^1[U^1 \mid \hist{i} a_i ;\; a_i\sim \alpha;\; \pi_{i+1},\ldots,\pi_n]\\
+ & w^2 \PP^2\left(\dohist{i}\right)\cdot \EE^2[U^2 \mid \hist{i} a_i;\; a_i\sim \alpha;\; \pi_{i+1},\ldots,\pi_n]
\end{align*}
hence the result.
\end{proof}

When the players have the same beliefs, they aways assign the same probability to the machine's inputs, so the weights on their respective expectations do not change over time.  In this case, Harsanyi's utility aggregation formula is recovered as a special instance:

\begin{corollary}[Harsanyi's utility aggregation formula]\label{cor:harsanyi}
Suppose that players 1 and 2 share the same beliefs about the environment, i.e., the pair $(D^1,D^2)$ of compatible POMDPs agree on all parameters except the players' utility functions $U^1\neq U^2$. Then a policy $\pi$ is Pareto optimal if and only if there exist weights $w^1,w^2\geq 0$ with $w^1+w^2=1$ such that for $i\le n$, $\pi_i$ satisfies
\[
\pi^i(-\mid\hist{i}) \in \argmax_{\alpha\in\Delta A} \left(\EE[w^1U^1+w^2U^2] \mid \hist{i} a_i ;\; a_i\sim \alpha;\; \pi_{i+1},\ldots,\pi_n]\right)
\]
where $\EE=\EE^1=\EE^2$ denotes the shared expectations of both players.
\end{corollary}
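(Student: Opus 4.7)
The plan is to derive the corollary directly from \thm{main} by specializing the backward recursion to the shared-belief case. Because $D^1$ and $D^2$ agree on every parameter except the utility functions, both players assign identical probabilities to every history-action sequence, and hence their conditional expectations of any random variable (under the common shared policy) also coincide. In particular $\PP^1(\dohist{i}) = \PP^2(\dohist{i})$; write $p$ for this common value, and drop the superscripts on $\EE$.

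First I would substitute these equalities into the recursion from \thm{main}, so that the expression inside the $\argmax$ collapses to
\[
p\cdot\bigl(w^1\, \EE[U^1 \mid \hist{i}a_i;\; a_i\sim\alpha;\; \pi_{i+1},\ldots,\pi_n] + w^2\, \EE[U^2 \mid \hist{i}a_i;\; a_i\sim\alpha;\; \pi_{i+1},\ldots,\pi_n]\bigr).
\]
Next, I would apply linearity of expectation to fuse the two terms into $\EE[w^1U^1 + w^2U^2 \mid \hist{i}a_i;\; a_i\sim\alpha;\; \pi_{i+1},\ldots,\pi_n]$, which is legitimate precisely because both conditional expectations are computed with respect to the same distribution. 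Finally, since $p \ge 0$ is a constant in $\alpha$, it can be pulled out of the $\argmax$ without altering the maximizing set, yielding exactly the expression stated in the corollary.

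The only potentially delicate point is the degenerate case $p = 0$: at histories of shared probability zero, the constraint inside \thm{main} is vacuous (every $\alpha \in \Delta A$ is a maximizer), and the $\argmax$ in the corollary likewise ranges over the entire simplex and imposes no restriction, so this case is handled automatically. The remainder is a routine exercise in linearity of expectation, and I anticipate no substantive obstacle; indeed the content of the corollary is essentially the observation that the time-varying weights of \thm{main} degenerate to constant weights exactly when the two predictive factors $\PP^j(\dohist{i})$ always agree.
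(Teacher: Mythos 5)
Your proposal matches the paper's proof exactly: specialize \thm{main} to the shared-belief case, factor out the common coefficient $\PP^1\left(\dohist{i}\right)=\PP^2\left(\dohist{i}\right)$, and apply linearity of expectation. Your extra remark about the degenerate case $p=0$ is a small point of added care the paper's one-line proof omits, but it does not change the argument.
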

\begin{proof}
Setting $\EE=\EE^1=\EE^2$ in \thm{main}, factoring out the common coefficient $\PP^1\left(\dohist{i}\right)=\PP^2\left(\dohist{i}\right)$, and applying linearity of expectation yields the result.
\end{proof}

\subsection{Comparison to na\"{i}ve utility aggregation}\label{sec:comparison}

To see the necessity of the $\PP^j$ terms that shift the expectation weights in \thm{main} over time, let us compare it with the behavior of an alternative optimization criterion that maximizes a fixed linear combination of expectations.

\paragraph{A cake-splitting scenario.}  The parameters of this scenario are laid out in Table \ref{table:scenario}, and described as follows:

Alice (Player 1) and Bob (Player 2) are about to be presented with a cake which they can choose to split in half to share, or give entirely to one of them.  They have (built or purchased) a robot that will make the cake-splitting decision on their behalf.  Alice's utility function returns $0$ if she gets no cake, $20$ if she gets half a cake, or $30$ if she gets a whole cake.  Bob's utility function works similarly.  

However, Alice and Bob have slightly different beliefs about how the environment works.  They both agree on the state of the environment that the robot will encounter at first: a room with a cake in it ($S_1=\text{``cake''}$).  But Alice and Bob have different predictions about how the robot's sensors will perceive the cake: Alice thinks that when the robot perceives the cake, it is $90\%$ likely to appear with a red tint ($O_1=\text{``red"}$), and $10\%$ likely to appear with a green tint ($O_1=\text{``green"}$), whereas Bob believes the exact opposite.  In either case, upon seeing the cake, the robot will either give Alice the entire cake ($A_1=S_1=\text{(all, none)}$), split the cake half-and-half ($A_1=S_1=\text{(half, half)}$), or give Bob the entire cake ($A_1=S_1=\text{(none, all)}$).  Moreover, Alice and Bob have common knowledge of all these facts.

\begin{table}[htbp]
\centering
\begin{tabular}{*{7}{|c}|}
\hline
$S_1$ & $O_1$& $\PP^1(O_1\mid S_1)$ & $\PP^2(O_1\mid S_1)$ & $A_1=S_1$ & $U^1$ & $U^2$ \\ \hline
\multirow{ 6}{*}{cake}
& \multirow{ 3}{*}{red}
& \multirow{ 3}{*}{$90\%$}
& \multirow{ 3}{*}{$10\%$}
& (all, none) & 30 & 0 \\ 
&&&& (half, half) & 20 & 20 \\ 
&&&& (none, all) & 0 & 30 \\ 
\cline{2-7}
& \multirow{ 3}{*}{green}
& \multirow{ 3}{*}{$10\%$}
& \multirow{ 3}{*}{$90\%$}
& (all, none) & 30 & 0 \\ 
&&&& (half, half) & 20 & 20 \\ 
&&&& (none, all) & 0 & 30 \\ 
\hline
\end{tabular}
\caption{An example scenario wherein a Pareto optimal policy undergoes priority shifting}
\label{table:scenario}
\end{table}

Now, consider the following Pareto optimal policy that favors Alice (Player 1) when $O_1$ is red, and Bob (Player 2) when $O_1$ is green:
\begin{align*}
&\hat\pi(- \mid \text{red}) = 100\%\text{(all, none)}\\
&\hat\pi(- \mid \text{green}) = 100\%\text{(none, all)}
\end{align*}
This policy can be viewed intuitively as a bet between Alice and Bob about the value of $O_1$, and is highly appealing to both players:
\begin{align*}
\EE^1[U^1; \hat\pi] &= 90\%(30) + 10\%(0) = 27\\
\EE^2[U^2; \hat\pi] &= 10\%(0) + 90\%(30) = 27
\end{align*}
In particular, $\hat\pi$ is more appealing to both Alice and Bob than an agreement to deterministically split the cake (half, half).  However, 

\begin{proposition}\label{prop:impossibility}
The Pareto optimal strategy $\hat\pi$ above cannot be implemented by any machine that na\"{i}vely maximizes a fixed-over-time linear combination of the conditionally expected utility of the two players, i.e., by any policy $\pi$ satisfying 
\begin{equation}\label{eqn:naive}
\pi(- \mid o_1) \in \argmax_{\alpha\in\Delta A}\left(r\cdot\EE^1[U^1 \mid o_1; a_1\sim\alpha] + (1-r)\cdot\EE^2[U^2 \mid o_1; a_1\sim\alpha]\right)
\end{equation}
for some fixed $r\in[0,1]$.  Moreover, every such policy $\pi$ is strictly worse than $\hat\pi$ in expectation to one of the players.
\end{proposition}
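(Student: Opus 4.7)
My plan turns on the observation that both players agree $S_1 = \text{cake}$ with probability $1$ and that each action deterministically fixes the resulting allocation and thus each player's utility. So, for any action distribution $\alpha$ with vertex probabilities $(p_A, p_H, p_B)$ on (all, none), (half, half), (none, all), the conditional expectation $\EE^j[U^j \mid o_1;\, a_1 \sim \alpha]$ depends on neither $o_1$ nor $j$'s prior, and simplifies to $u^1(\alpha) := 30 p_A + 20 p_H$ for Alice and $u^2(\alpha) := 20 p_H + 30 p_B$ for Bob. The inner $\argmax$ in \eqn{naive} accordingly reduces to the $o_1$-independent program $\max_\alpha \bigl(30 r\, p_A + 20\, p_H + 30(1-r)\, p_B\bigr)$ over the simplex; writing $A^*(r)$ for its maximizer set, every na\"ive $\pi$ must pick $\pi(-\mid\text{red}),\ \pi(-\mid\text{green}) \in A^*(r)$.

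I would then read off two structural facts by comparing the three vertex values: (a) whenever $r > 1/3$, every $\alpha \in A^*(r)$ has $p_B = 0$, and (b) whenever $r < 2/3$, every $\alpha \in A^*(r)$ has $p_A = 0$, each verified by a short check of the three generic $r$-regions and the two tie values $r \in \{1/3, 2/3\}$. Non-implementability of $\hat\pi$ then falls out immediately: $\hat\pi(-\mid\text{red}) = (\text{all, none})$ has $p_A = 1 > 0$, violating (b) and so forcing $r \ge 2/3$; while $\hat\pi(-\mid\text{green}) = (\text{none, all})$ has $p_B = 1 > 0$, violating (a) and so forcing $r \le 1/3$. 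No single $r$ meets both demands.

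For strict suboptimality I would split on $r \ge 1/2$ versus $r \le 1/2$. In the first case, (a) applies on both observations, yielding $u^2(\alpha) = 20 p_H \le 20$ and hence $\EE^2[U^2;\pi] \le 20 < 27 = \EE^2[U^2;\hat\pi]$, strictly hurting Bob. The second case is symmetric: (b) gives $\EE^1[U^1;\pi] \le 20 < 27$, strictly hurting Alice. The step I expect to require the most care is the tie-breaking at $r \in \{1/3, 2/3\}$, because at these weights $A^*(r)$ is a one-dimensional edge of the simplex and a na\"ive policy may legitimately place different mixtures on red and green, so the argument cannot proceed by assuming $\pi(-\mid\text{red}) = \pi(-\mid\text{green})$; the structural facts (a)--(b), which hold \emph{uniformly} across $A^*(r)$, are precisely what keep the argument intact in these boundary cases.
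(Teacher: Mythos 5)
Your proposal is correct and follows essentially the same route as the paper's proof: both reduce the na\"ive objective to maximizing $30r\,p_A + 20\,p_H + 30(1-r)\,p_B$ over the action simplex, determine the $\argmax$ set as a function of $r$, and conclude that in every case one player's expectation is capped at $20 < 27$. Your repackaging of the case analysis into the two uniform facts (a) and (b) is a slightly tidier bookkeeping of the same argument, and like the paper it correctly handles the key subtlety that a na\"ive policy may select different elements of the $\argmax$ on red versus green.
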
 

This proposition is relatively unsurprising when one considers the policy $\hat\pi$ intuitively as a bet-settling mechanism, and that the nature of betting is to favor different preferences based on future observations.  However, to be sure of this impossibility claim, one must rule out the possibility that the $\hat\pi$ could be implemented by having the machine choose which element of the $\argmax$ in \eqn{naive} to use based on whether the cake appears red or green.  

\begin{proof}[Proof of \prop{impossibility}] Suppose $\pi$ is any policy satisfying \eqn{naive} for some fixed $r$, and consider the following cases for $r$:
\begin{enumerate}
\item If $r < 1/3 $, then $\pi$ must satisfy
\[
\pi(-\mid o_1) = 100\%\text{(none, all)}.
\] 
Here, $\EE^1[U^1 ; \pi] = 0 < 27$, so $\pi$ is strictly worse than $\hat\pi$ in expectation to Alice.

\item If $r = 1/3 $, then $\pi$ must satisfy
\[
\pi(-\mid o_1) = q(o_1)\text{(none, all)} + (1-q(o_1))\text{(half, half)}
\] 
for some $q(o_1)\in[0,1]$ depending on $o_1$.  Here, $\EE^1[U^1 ; \pi] \le 20 < 27$ (with equality when $q(\text{red})=q(\text{green})=1$), so $\pi$ is strictly worse than $\hat\pi$ in expectation to Alice.

\item If $1/3 < r < 2/3 $, then $\pi$ must satisfy
\[
\pi(-\mid o_1)=100\%\text{(half, half)}
\]
Here, $\EE^1[U^1 ; \pi] = \EE^2[U^2 ; \pi] = 20 < 27$, so $\pi$ is strictly worse than $\hat\pi$ in expectation to both Alice and Bob.
\end{enumerate}
The remaining cases, $r=2/3$ and  $r>2/3$, are symmetric to the first two, with Bob in place of Alice and (none, all) in place of (all, none).
\end{proof}

\section{Interpretations}

Theorem \ref{thm:main} shows that a Pareto optimal policy must tend, over time, toward prioritizing the expected {\em utility} of whichever player's \emph{beliefs} best predict the machine's inputs better.  From some perspectives, this is a little counterintuitive: not only must the machine gradually place more predictive weight on whichever player's prior is a better predictor, but it must reward that player by attending more to her utility function as well.  This behavior is not an assumption, but rather is forced to occur by Pareto optimality.  The players \emph{must} agree to this pattern of shifting priority over time, or else they will leave Pareto improvements on the table during the bargaining period when they choose the machine's policy.  

This phenomenon warrants a few interpretations:

\paragraph{Bet settling.}

As discussed in \sect{comparison}, a machine implementing a Pareto optimal policy can be viewed as a kind of bet-settling device.  If Alice is 90\% sure the Four Horsemen will appear tomorrow and Bob is 80\% sure they won't, it makes sense for Alice to ask---while bargaining with Bob for the machine's policy---that the machine prioritize her values more if the Four Horsemen arrive tomorrow, in exchange for prioritizing Bob's values more if they don't.  Both parties will be happy with this agreement in expectation.  As long as it remains possible to redistribute the machine's priorities in a way that resembles an agreeable bet between Alice and Bob, its policy is not yet Pareto optimal.  Thus, \thm{main} can be seen as saying that a Pareto optimality policy goes about settling a bet on the machine's input at each time step, in such a way that no additional bets settlable by the policy are desirable to both players.

\paragraph{Moral realism with Bayesian updating.}

Alternatively, we could take more seriously the interpretation of the weights $w^j$ in \thm{main} as prior ``beliefs'' about the value of the made-up latent variable $\bool $ from \lem{mixture} that simultaneously governs (1) how the environment works, and (2) what utility function is ``correct'' to pursue.  This interpretation is a bit unnatural because, even if the original environmental variables $S_i$ were very grounded in physical reality, the abstract variable $\bool $ in \lem{mixture} is merely a fiction conjured up to imply a correlation between ``is'' and ``ought'': namely, that either the world \emph{is} governed by $\PP^1$, and $U^1$ \emph{ought} to be optimized, or the world \emph{is} governed by $\PP^2$, and $U^2$ \emph{ought} to be optimized.  This occurs even if each of the two players treats their beliefs and utilities completely separately (i.e., even if they apply Bayesian updating only to their beliefs, and keep their utility functions fixed).

Mixing ``is'' and ``ought'' in this way is often considered a type error.  Nonetheless, many humans report an intuitive sense that there are objective, right-and-wrong answers to moral questions that can be answered by observing the world.  If a human is implicitly and approximately acting in a Pareto optimal fashion for a mixture of belief/utility outlooks $D^1, \ldots, D^k$, then the process of ``updating'' to favor a certain utility function might feel, from the inside, like ``finding an answer" to a moral question.

\section{Current limitations and future directions}

The eventual aim of this work is to facilitate the cooperative development and deployment of advanced AI systems, by simplifying the process of bargaining for shared control of such systems, and by making collaborative outcomes generally easier to implement and more attractive.  For this purpose, while Pareto optimality is a desirable condition to aim for, it is not an adequate solution concept on its own.  Indeed, the policy ``maximize player 1's utility function, without regard for player 2'' is Pareto optimal, yet is clearly not the sort of solution one would expect two nations to agree upon.  This and at least several other issues must be addressed to build a satisfactory negotiation framework, exhibited below in order of increasing difficulty as estimated by the author.

\paragraph{1. BATNA dominance.}  In any bargaining situation, each player has a ``best alternative to negotiated agreement'', or BATNA, that she expects to obtain if the other player chooses not to cooperate.  The characterization of Pareto optimality given in \thm{main} {\em does not} account for the players' BATNAs.  Given a facet of the Pareto boundary, specified by the maximization of a linear function with weights $(w^1,w^2)$, a policy $\pi$ satisfying \thm{main} will yield an expectation pair $\epair$ lying on that facet.  Thus, the bargaining problem has been reduced to choosing an appropriate facet of the Pareto boundary.  But suppose not all points on the chosen facet lie above both players' BATNAs.  Then, in order to satisfy the individual rationality of the players, the policy {\em should} target a more specific subset of that facet.

\paragraph{2. Targeting specific expectation pairs.}  If a specific target value for the expectation pair $\epair$ is desired, unless that pair is a vertex of the Pareto region (e.g., perhaps the boundary is curved), the best that following the recursion of \thm{main} ensures is a point on the same facet.  The ability to target a specific pair would solve not only BATNA dominance (1), but also help achieve other fairness or robustness criteria that might arise from bargaining.  One approach would be to make a small modification to the players' utility functions to ensure that the resulting Pareto boundary is curved, thereby avoiding this problem at the cost of a tiny utility adjustment.  Choosing a simple form for the adjustment that is amenable to formal proof would be a natural next step in this direction.

\paragraph{3. Information trade.}  Our algorithm implicitly favors whichever player best predicts the machine's input history, given its action history.  This makes sense when the players have common knowledge of each other's priors and observations, at which point they have already had the opportunity to update on each other's views and chosen not to.  This is unrealistic if Alice knows that Bob has made observations in the past that Alice did not.  In that case, Alice will view Bob's beliefs as containing valuable information that ought to shift her prior.  She may wish to bargain with Bob for access to that information in order to improve her own ability to optimize the machine's policy.  Perhaps she would concede some control over the machine (by reducing her weight, $w^1$) in exchange for information provided by Bob to improve her beliefs.  An efficient procedure to naturally facilitate this sort of exchange would be complimentary \thm{main}.  One approach would be to have each player express their posterior as a function of the other's, and use a fixed point theorem to choose a stable pair of posteriors.  However, many questions arise about this method when there are multiple fixed points.

\paragraph{4. Learning priors and utility functions.}  It is notoriously difficult to explicitly specify one's utility function $U$ to a machine, so in practice, one must choose a method enabling the machine to learn the utility function.  Cooperative inverse reinforcement learning (CIRL) \citep{hadfield2016cooperative} exhibits such a framework, and reduces the problem to solving a POMDP.   In CIRL, a human and a robot play a cooperative game wherein both players aim to maximize the human's utility function $U$, but the robot is uncertain about $U$ and must infer it from the human.  Moreover, the human and robot have common knowledge of this situation, so the human may engage in ``teaching'' behavior to help the robot along.  Such dynamics must be accounted for in a satisfactory treatment of negotiation for a machine's priorities.  In addition, the players' priors should probably also be learned by a machine in some way rather than explicitly specified.

\paragraph{5. Incentive compatibility.}  Assuming any particular method for learning players' priors and utility functions, a question arrises as to whether it incentivizes players to represent their beliefs and utilities honestly.  For example, Alice  may have some incentive to exaggerate her estimation of her BATNA in the positive direction, to motivate Bob to ``sweeten the deal'' by conceding her a higher priority $w^1$ in the recursion of \thm{main}.  As well, players might also have incentives to alter their reported beliefs in order to exaggerate the degree to which the machine's decisions will affect their utilities.  A satisfactory learning method should rule out or otherwise cope with this phenomenon.  A great deal of literature already exists on incentive compatibility, as begun by \citet{hurwicz1972informationally}, \citet{myerson1979incentive}, and \citet{myerson1983efficient}, which should offer a good start.

\paragraph{6. Naturalized decision theory.} The POMDP setting used here is ``Cartesian'' in that it assumes a clear divide between the machine's inner workings and its environment.  This is highly inappropriate when the machine may be copied or simulated; it may wind up in Newcomb-like problems as in \citet{soares2015toward}, and very strange cooperative equilibria may exist between its copies, such as in \citet{critch2016parametric}.  Instead, one should assume a ``naturalized'' model of the problem where the machine is part of its environment, as in \citet{fallenstein2015reflective}.  Some attempts have been made to characterize optimal decision-making in a naturalized setting, e.g., by  \citet{orseau2012space}, but very few theorems to aid in sequential implementation exist (e.g., no analogue of \prop{separability} is known), except possibly for some self-reflective properties exhibited by Garrabrant's logical inductors \citep{garrabrant2016logical} that might be expanded to exhibit relevance to sequential decision-making.   Without a satisfactory model of naturalized decision-making for the machine to follow, the negotiating parties might unwittingly assign the machine a policy vulnerable to Newcomb-like extortions.  On the other hand, a satisfactory resolution would not only help to model the machine's situation, but also that of the players themselves during the negotiation phase.

\section{Conclusion}

Insofar as \thm{main} is not particularly mathematically sophisticated---it employs only basic facts about convexity and linear algebra---this suggests there may be more low-hanging fruit to be found in the domain of ``machine implementable social choice theory".  To recapitulate, \thm{main} represents two deviations from the intuition of na\"{i}ve utility aggregation: to achieve Pareto optimality for players with differing beliefs, a machine must (1) use each player's own beliefs in evaluating how well an action will serve that player's utility function, and (2) shift the relative priority it assigns to each player's expected utilities over time, by a factor proportional to how well that player's beliefs predict the machine's inputs.

As a final remark, consider that social choice theory and bargaining theory were both pioneered during the Cold War, when it was particularly compelling to understand the potential for cooperation between human institutions that might behave competitively.  In the coming decades, machine intelligences will likely bring many new challenges for cooperation, as well as new means to cooperate, and new reasons to do so.  As such, new technical aspects of social choice and bargaining, along the lines of this paper, will likely continue to emerge.  In particular, the problems outlined in Section 5 represent areas particularly promising for facilitating cooperative outcomes in the deployment of advanced AI systems, and the present author is seeking collaborations to address them.








\bibliography{main.bib}

\end{document}